\newcolumntype{C}[1]{>{\centering\arraybackslash}p{#1}}
\pgfplotsset{
    compat=1.18,
    cycle from colormap manual style/.style={
        cycle list={
            [of colormap=Paired]
        }
    },
}
\newcommand{\class}{c}
\newcommand{\Class}{C}
\newcommand{\Classes}{\mathcal{C}}
\newcommand{\prediction}{\hat{\class}}
\newcommand{\classifier}{h}
\newcommand{\classifierp}{h_{\prob}}
\newcommand{\classifierpa}{h_{\proba}}
\newcommand{\feature}[1]{f_{#1}}
\newcommand{\Feature}[1]{F_{#1}}
\newcommand{\FeatureSet}[1]{\mathcal{F}_{#1}}
\newcommand{\features}{f}
\newcommand{\Features}{F}
\newcommand{\FeaturesSet}{\mathcal{F}}
\newcommand{\numfeatures}{N}
\newcommand{\prob}{P}
\newcommand{\proba}{p}
\newcommand{\pertproba}{p'}
\newcommand{\probfamily}{\mathcal{P}}
\newcommand{\testprob}{\prob_{\text{test}}}
\newcommand{\testproba}{\proba_{\text{test}}}
\newcommand{\testset}{D_{\text{test}}}
\newcommand{\trainprob}{\prob_{\text{train}}}
\newcommand{\trainset}{D_{\text{train}}}
\newcommand{\maxprob}{u_{\text{m}}}
\newcommand{\entropy}{u_H}
\newcommand{\aleatunc}{u_{a}}
\newcommand{\totunc}{u_{t}}
\newcommand{\epistunc}{u_{e}}
\newcommand{\Nmodels}{M}
\newcommand{\robmetric}{\varepsilon}
\newcommand{\globeps}{\varepsilon_{\mathrm{glob}}}
\newcommand{\loceps}{\varepsilon_{\mathrm{loc}}}
\newcommand{\globperta}{\probfamily^{\,\mathrm{glob}}_{\proba,\,\robmetric}}
\newcommand{\globpertzeroa}{\probfamily^{\,\mathrm{glob}}_{\proba,\,0}}
\newcommand{\locperta}{\probfamily^{\,\mathrm{loc}}_{\proba,\,\robmetric}}
\newcommand{\locpertzeroa}{\probfamily^{\,\mathrm{loc}}_{\proba,\,0}}
\newcommand{\bulletprobfamily}{\probfamily_{\hspace{-1pt}\bullet}}
\newcommand{\bulletprobfamilyPa}{\probfamily_{\smash{\proba},\bullet}}
\newcommand{\probmeasfamily}[1]{\mathcal{M}_{#1}}
\definecolor{global_eps}{rgb}{0.2, 0.72, 0.92}
\definecolor{local_eps}{rgb}{0.62, 0.82, 0.2}
\definecolor{aleatoric}{rgb}{0.78, 0.28, 0.38}
\definecolor{entropies}{rgb}{0.38, 0.506, 0.883}
\definecolor{max_prob}{rgb}{0.99, 0.68, 0.72}
\definecolor{total}{rgb}{0.29, 0.53, 0.13}
\definecolor{epistemic}{rgb}{0.43, 0.5, 0.5}
\title{Robustness quantification: a new method for assessing the reliability of the predictions of a classifier}
\author[]{Adri\'an Detavernier}
\author[]{Jasper De Bock}
\affil[]{Foundations Lab for imprecise probabilities\\Ghent University\\Belgium}
\begin{document}

\maketitle

%%%%%%%%%%%%%%%%%%%%%%%%%%%%%%%%%%%%%%%%%%%%%%%%%%
%             Abstract and keywords              %
%   *only* for papers, not for poster abstracts  %
%%%%%%%%%%%%%%%%%%%%%%%%%%%%%%%%%%%%%%%%%%%%%%%%%%
\begin{abstract}% No length restrictions, but keep things reasonable
Based on existing ideas in the field of imprecise probabilities, we present a new approach for assessing the reliability of the individual predictions of a generative probabilistic classifier. We call this approach robustness quantification, compare it to uncertainty quantification, and demonstrate that it continues to work well even for classifiers that are learned from small training sets that are sampled from a shifted distribution.
\end{abstract}
\begin{keywords}% Keywords provide context for readers, so put in at least three
	Robustness quantification, classification, reliability, distribution shift, small data sets, imprecise probabilities.
\end{keywords}

%%%%%%%%%%%%%%%%%%%%%%%%%%%%%%%%%%%%%%%%%%%%%%%%%%
%              Content of the paper              %
%%%%%%%%%%%%%%%%%%%%%%%%%%%%%%%%%%%%%%%%%%%%%%%%%%

\section{Introduction}
\label{sec:intro}

Let's say that you start using an AI model for a high-risk application, such as medical diagnosis or self-driving cars.
At that point, if you use a classifier to automatically make predictions, it does not suffice that your model generally has a \(99\%\) accuracy: you also want to know whether the current prediction is part of that other \(1\%\) of the cases.
That is, you'd really want to know how reliable the current prediction is.

Several approaches for quantifying this reliability exist, two of which we intend to compare. The most common approach is uncertainty quantification \cite{hullermeier2021aleatoric,sale2024label,pmlr-v180-hullermeier22a}, whose aim it is to numerically quantify the amount of uncertainty associated with a prediction. We instead focus on an alternative approach, which we call robustness quantification, whose aim it is to quantify how robust a prediction is against uncertainty. One might say that we quantify how much uncertainty the model \emph{could} cope with before changing its prediction, and this regardless of how much uncertainty is actually present.
While our proposed terminology is new, the ideas behind robustness quantification have been successfully tested several times by now \cite{NIPS2014_09662890,correia2020robustclassificationdeepgenerative,pmlr-v62-mauá17a}, always relying on techniques from imprecise probabilities \cite{augustin2014introduction}. Our contribution consists in conceptually introducing this approach in a general setting and comparing it with uncertainty quantification.

In particular, we focus on how uncertainty and robustness quantification compare in cases where the available data is limited or when there is a distribution shift between the train and test data.
Our motivation for this comparison is fueled by the fact that these problems arise frequently in practice, and can have a big impact on how a classifier performs in the real world (on unseen data) \cite{quinonero2022dataset,pmlr-v139-koh21a,raudys1991small,vabalas2019machine}.
For now, we mainly focus on the naive Bayes classifier as a test case because of its simplicity and efficiency.
However, our methods can be applied to more complex classifiers as well.

From our experiments we conclude that robustness quantification keeps performing well even with limited data or in the presence of a distribution shift, while uncertainty quantification sees a decrease in performance.
Moreover, our robustness metrics keep on performing well even without distribution shift and with enough data, meaning that there is no trade-off paid for the better protection that they offer.

\section{Classification}
\label{sec:classification}

We consider the problem of classification, where the goal is to predict the correct class of an instance based on its features.
An example of a high-risk classification problem is predicting whether a patient has cancer (the class) based on its medical data and images (the features).

\subsection{Classifiers}
We denote the class variable by \(\Class\), which can take on any value \(\class\) in the finite set of classes \(\Classes\).
An instance can have one or several features \(\Feature{i}\), with \(i \in \{1, \dots, \numfeatures\}\), where \(\numfeatures\) is the number of features.
Every one of these features takes values \(\feature{i}\) in the finite set \(\FeatureSet{i}\).
The vector containing all the feature variables \(\Feature{i}\) is denoted as \(\Features = (\Feature{1}, \dots, \Feature{\numfeatures})\).
The particular feature values \(\feature{i}\) of an individual instance are collected in a feature vector \(\features = (\feature{1}, \dots, \feature{\numfeatures})\), to which we often refer as the (set of) features of said instance.
All such possible feature vectors are collected in the set \(\FeaturesSet = \FeatureSet{1} \times \dots \times \FeatureSet{\numfeatures}\).
Every instance is determined by the combination of a class \(\class\) and its features \(\features\).
In practice, however, we typically only know the features of an instance.
The aim of a classifier is then to predict the unknown class, given those features.

Formally, this is done with a classifier:  a deterministic function \(\classifier: \FeaturesSet \to \Classes\) that maps each feature vector \(\features \in \FeaturesSet\) to a class \(\classifier(\features) \in \Classes\).
Ideally, we would want a classifier to always predict the correct class.
This is not possible though because instances with the same set of features may nevertheless have different classes.
We then want a classifier to be correct as often as possible.

In the following sections, we dive deeper into how classifiers, and probabilistic ones in particular, predict the class of an instance, and discuss some of the problems that arise when trying to learn such classifiers from data.

\subsection{Generative probabilistic classifiers}
Rather than predict the class directly, a probabilistic classifier first predicts the (conditional) probability \(\proba(\class \vert \features)\) of every possible class \(\class\) given the features \(\features\) of the considered instance, and then predicts the class of this instance based on these probabilities.
This predicted class is usually taken to be the one that has the highest probability given the features:
\begin{equation}
	\label{eq:prediction}
	\classifier(\features) \in \arg \max_{\class \in \Classes} \proba(\class \vert \features).
\end{equation}
In most cases, this inclusion is simply an equality.
However, it is possible, although very unlikely in practice, that there are several equiprobable classes that all have the highest probability.
The set \(\arg \max_{\class \in \Classes} \proba(\class \vert \features)\) then contains these classes, and \(\classifier(\features)\) is then arbitrarily taken to be one of them.
The chosen or predicted class \(\classifier(\features)\) is called the \emph{prediction}.
If the feature vector and classifier are clear from the context, we will sometimes denote the predicted class \(\classifier(\features)\) by \(\prediction\) as well.
% Since we are mostly interested in the most common case where the set of predicted classes is a singleton, we introduce, for the sake of simplicity the following compact notation:
% \begin{equation}
% 	\label{eq:singleprediction}
% 	\classifier(\features) = \arg \max_{\class \in \Classes} \proba(\class \vert \features)
% \end{equation}
% for the conjunction of Equation \eqref{eq:prediction} and the condition that the predicted class is unique, that is, \(\vert \arg \max_{\class \in \Classes} \proba(\class \vert \features)\vert = 1\).

A probabilistic classifier can predict the (conditional) probabilities of the possible classes given the features in, essentially, two ways, that each correspond to a type of probabilistic classifier: discriminative or generative.
Discriminative classifiers try to determine the conditional probabilities \(\proba(\class\vert\features)\) directly.
We instead focus on generative classifiers, which first determine a joint probability distribution \(\prob\) for the class and features.
%Throughout this paper we will use the notation \(\prob\) to refer to the joint distribution of the class and features.
Since $\Classes$ and $\FeaturesSet$ are finite in this contribution, every such distribution \(\prob\) is uniquely characterized by its (probability) mass function \(\proba \colon \Classes \times \FeaturesSet \to [0,1]\), which simply assigns a probability \(\proba(\class, \features) \coloneq \prob(\Class = \class \text{ and } \Features = \features)\) to all $\class\in\Classes$ and $\features\in\FeaturesSet$.
The conditional probabilities that are used to classify instances are then obtained by conditioning: \(\proba(\class\vert\features) = \nicefrac{\proba(\class, \features)}{\proba(\features)}\), assuming that the observed features \(\features\) have a non-zero (marginal) probability \(\proba(\features)\coloneq \sum_{\class \in \Classes} \proba(\class, \features)>0\).
To emphasize the dependency of a generative classifier on the mass function \(\proba\), we make this explicit in our notation by denoting it as \(\classifierpa\).
Our reason for focussing on generative classifiers is that our ideas about robustness, and the theory of imprecise probabilities on which they are based, can be applied more naturally in this setting.

Since we maximize over the possible classes to get the prediction \(\classifierpa(\features)\) for a feature vector \(\features\), the denominator $\proba(\features)$ in Bayes's rule---which does not depend on the class---can be ignored in Equation \eqref{eq:prediction}.
For generative classifiers, Equation \eqref{eq:prediction} therefore simplifies as follows:
\begin{equation}
	\label{eq:joint_prediction}
	\classifierpa(\features) \in \arg \max_{\class \in \Classes} \proba(\class, \features).
\end{equation}
If \(\proba(\features)=0\), then \(\proba(\class, \features)=0\) for all \(\class\) as well, in which case Equation \eqref{eq:joint_prediction} tells us that \(\classifierpa(\features)\) is an arbitrary class in \(\Classes\); this case is typically avoided in practice though.

\subsection{Learning a classifier}
If we want to use a (generative) classifier, we first need to somehow learn its joint distribution \(\prob\).
This is typically done with a labeled data set, where the class of each instance is known.
We will call this data set the training set, and denote it by \(\trainset\).
In practice, it is typically assumed that the training set has an underlying distribution that generated it: the training distribution \(\trainprob\).
Once a classifier has been learned, we of course want to put it to use on unseen data to see how well it performs.
To that end, we consider a second data set \(\testset\) that we call the test set.
Here too, it is typically assumed that this data set has an underlying distribution, called the test distribution, which we denote by \(\testprob\).
In a perfect scenario then, the joint distribution $\prob$ of our classifier is equal to $\testprob$, yielding a classifier that issues the correct prediction as often as possible, at least on average for large test sets \(\testset\). However, this almost never occurs because of, essentially, two reasons. One reason is that $\testprob$ may not be equal to $\trainprob$, which is what we will come to next, but let us first look at the case where they are equal.

If $\testprob=\trainprob$, then the task of learning a generative classifier basically boils down to trying to make sure that $\prob$ approximates $\trainprob$ sufficiently well, the perfect scenario being that $\prob=\trainprob=\testprob$.
However, this perfect scenario is only possible in the idealized situation where \(\trainset\) is infinitely large.
For (very) small training sets \(\trainset\), in fact, it often happens that \(\trainset\) is not at all representative for $\trainprob$, making it impossible for $\prob$  to approximate \(\trainprob\) with any reasonably accuracy.
A possible solution to this problem consists in collecting more data.
Unfortunately, this not always possible.

Regardless of whether $\trainset$ is large enough to be able to accurately learn $\trainprob$, there is also the additional problem that $\trainprob$ may not be equal to $\testprob$, a phenomenon known as \emph{distribution shift}. Consider for example a situation where a classifier is trained based on medical data from one hospital, and then used in another hospital \cite{zech2018variable}, or where it is trained on colored images but then used on black-and-white images.
In such cases, even if $\prob=\trainprob$, the performance of the corresponding classifier $\classifierp$ is bound to suffer.
This is a very common problem, that is known to have a big impact on the performance of classifiers \cite{quinonero2022dataset,pmlr-v139-koh21a}.
Distribution shift is often ignored though, by (tacitly) assuming that $\trainprob$ and $\testprob$ are equal.
%In contrast, we will explicitly explore what happens in the presence of a distribution shift, and therefore, allow the train and test distributions to be different in our experiments.

Regardless of how well $\prob$ approximates $\testprob$, the performance of a classifier is often measured in terms of \emph{accuracy}: the number of correctly predicted instances  divided by the total number of predictions.
This metric is, in most cases, good enough to evaluate a classifier, since the goal is often to be correct as much as possible.
However, in cases where a bad prediction could have huge consequences, we are no longer only interested in the overall performance of the classifier.
In such cases, it is equally important to assess the reliability of each individual prediction, allowing us to flag those that might be unreliable.
%Accuracy alone is not enough to evaluate a classifier then; we also need extra information about (the reliability of) each of the individual predictions.
%Other metrics to evaluate a classifier, besides accuracy, such as precision, recall, F1-score, etc., could be considered but these too only evaluate the overall performance.
This brings us to reliability quantification.

\section{Reliability quantification}
\label{sec:reliability}

In high-risk applications, it is of crucial importance to know for each separate prediction whether it is correct or not, because a wrong prediction could have severe consequences.
Although it is of course infeasible to construct a classifier that is always correct, we could try to construct a classifier that is correct in most cases, and can single out the other, more difficult cases.
This is exactly what we want to achieve with \emph{reliability quantification}.
That is, we want to quantify the reliability of each individual prediction.
Any numeric value that aims to quantify (some aspect of) the reliability of a prediction, we call a \emph{reliability metric}.
To stress that these reliability metrics are associated with individual predictions, instead of the overall performance of a classifier,  we say that they are \emph{instance-based}.
We restrict ourselves to two methods for quantifying instance-based reliability, each of which focuses on a different aspect of reliability: \emph{uncertainty quantification} and \emph{robustness quantification}.

\subsection{Uncertainty}
Uncertainty quantification, as the terminology suggests, aims to quantify the uncertainty associated with individual predictions, the idea being that uncertainty negatively influences reliability.
%The idea is that the more uncertain the classifier is about a prediction, the less reliable it is and the harder it is to classify that instance.
To better understand some of the challenges of this approach, it is helpful to distinguish two different types of uncertainty that are often considered in this context \cite{hullermeier2021aleatoric}: \emph{aleatoric} and \emph{epistemic} uncertainty.

Aleatoric uncertainty has to do with the intrinsic variability of the test data, as captured by \(\testprob\).
For an instance with features $\features$, it is the uncertainty that corresponds to the conditional probabilities $\testproba(\class\vert\features)$: even if we know $\testprob$ perfectly, then still, several classes $\class$ remain possible, each with their own probability of occurrence $\testproba(\class\vert\features)$.
Since it is intrinsic to the process, aleatoric uncertainty is irreducible, meaning that it will always remain no matter how hard we try; it is the reason why in practice no classifier can be expected to always yield a correct prediction.

%However, the goal of uncertainty quantification is to estimate or capture the amount of aleatoric uncertainty, for which we need a distribution to fully describe it, in a single number.
%Consequently, a lot of information is lost in this process.

Epistemic uncertainty, on the other hand, has to do with the fact that we typically do not know \(\testprob\), in the sense that the joint distribution $\prob$ of a classifier will typically not be equal to $\testprob$.
One aspect of this epistemic uncertainty has to do with the fact that $\prob\neq\trainprob$, due to modelling choices or the limited availability of training data; this could be addressed by collecting more data, and is therefore considered reducible.
Another aspect of epistemic uncertainty has to do with the presence of distribution shift ($\trainprob\neq\testprob$), which can only be reduced if we have access to labelled data from $\testprob$.

The aim of uncertainty quantification, then, is to numerically quantify these two types of uncertainty with uncertainty metrics.
This is extremely challenging though.
On the one hand, for epistemic uncertainty, there is typically no way of knowing to which extent $\trainset$ is representative for $\trainprob$, nor how much $\testprob$ differs from $\trainprob$, making it very difficult to assess to which degree $\prob$ differs from $\testprob$.
On the other hand, uncertainty metrics that aim to quantify aleatoric uncertainty would ideally be based on the conditional probabilities $\testproba(\class\vert\features)$.
In practice, however, \(\testprob\) is not available, and one then has to do with the estimated conditional probabilities $\proba(\class\vert\features)$ instead, which may differ considerably from the real ones $\testproba(\class\vert\features)$ due to the presence of epistemic uncertainty.

Nevertheless, many \emph{uncertainty metrics} have been developed, some of which we discuss in more detail in Section~\ref{sec:uncertainty}.
Some of these metrics try to quantify aleatoric and epistemic uncertainty separately, while others try to quantify the combined effect of both, to which they refer as total uncertainty. %total uncertainty present (whatever that may mean).
However, one might question to which extent it is possible to make these distinctions while quantifying uncertainty though, since both notions are always intertwined: as we have just explained, aleatoric uncertainty cannot be reliably quantified without quantifying epistemic uncertainty, and epistemic uncertainty itself is very hard to quantify.
It is therefore to be expected that these methods will perform significantly worse in the presence of epistemic uncertainty, as we will also demonstrate in our experiments.

\subsection{Robustness}
Motivated by these difficulties, we here put forward a different approach, called robustness quantification. The main idea is to approach reliability from a different angle: instead of quantifying its uncertainty, which is notoriously hard to do, we instead quantify the robustness of a prediction. That is, we quantify how much epistemic uncertainty the classifier could handle without changing its prediction, thereby completely avoiding the difficult task of quantifying how much epistemic uncertainty there actually is. A prediction is then deemed reliable if it is robust, meaning that it would remain true even in the face of severe epistemic uncertainty.
We formalize this concept in Section~\ref{sec:robustness} for general generative classifiers, and further develop it in Section~\ref{sec:naive_classification} for the particular case of Naive Bayes classifiers.

The main feature that sets our approach apart from other approaches that consider the robustness of a classifier, such as adversarial robustness~\cite{carlini2019evaluating,bai2021recent} or robustness against distribution shift~\cite{NEURIPS2020_d8330f85}, is its instance-based character: we consider the robustness of individual predictions, whereas these other approaches consider the robustness of the overall performance of a classifier. For adversarial robustness~\cite{carlini2019evaluating,bai2021recent}, another difference is that we consider robustness against epistemic uncertainty (resulting from, for example, distribution shift), instead of robustness against manipulations of the (in that case typically continuous) features.

%This idea of instance-based robustness quantification is not new, and shows a surprisingly good correlation with accuracy \cite{NIPS2014_09662890,correia2020robustclassificationdeepgenerative,pmlr-v62-mauá17a}.
%This result is what inspired us to further elaborate on this idea and explore against what types of uncertainty this robustness can protect us.

\section{Uncertainty quantification}
\label{sec:uncertainty}

Before we delve into the topic of robustness quantification, and robustness metrics in particular, let us first take a brief look at a number of uncertainty metrics.
The first two metrics single out a specific aspect of the uncertainty about the class given the features: the probability of the predicted class and the (Shannon) entropy, respectively.
The last three metrics are also based on entropy, but combine this with ensemble techniques to better estimate the uncertainty \cite{shaker2020aleatoric}. Each of these five uncertainty metrics can be applied to arbitrary generative classifiers. For a more extensive overview, which also covers metrics for discriminative classifiers, or for specific model architectures, we refer the interested reader to the work of \citet{hullermeier2021aleatoric}.

\paragraph{Maximum probability}
A first straightforward uncertainty metric, denoted by \(\maxprob\), is one minus the conditional probability of the predicted class \(\prediction\), or equivalently, one minus the maximum probability over all classes:
\begin{equation}
	\maxprob(\features) = 1-\max_{\class \in \Classes} \proba(\class \vert \features) = 1-\proba(\prediction \vert \features).
\end{equation}
In an idealized situation where there is no epistemic uncertainty (so $\prob=\testprob$), this is the probability of making an incorrect prediction, and hence a perfect metric for quantifying aleatoric uncertainty. In general, in the presence of epistemic uncertainty, this might then be taken to represent a combination of both types.

\paragraph{Entropy}
A second metric for quantifying uncertainty, which is quite popular and has its roots in information theory, is \emph{entropy}. In particular, we consider the entropy of the conditional probability mass function for the class given the features \(\features\):
\begin{equation}
	\entropy(\features) = - \sum_{\class \in \Classes} \proba(\class \vert \features) \log_2 \proba(\class \vert \features).
\end{equation}
Entropy mostly tells us something about the shape of the conditional probability mass function: the more uniform it is, the higher the entropy.
Since the uniform case corresponds to randomly guessing the correct class, it makes sense to associate high values for $\entropy(\features)$ with there being more uncertainty; in particular, it is typically taken to be a metric that aims to capture the total uncertainty~\cite{shaker2020aleatoric}.

% Note that, in contrast to the other reliability metrics, entropy is inversely proportional to the uncertainty.
% To be able to better compare the entropy with the other metrics, we will sometimes take the negative of the entropy, denoted as \(\entropy^{-}\).

The last three uncertainty metrics combine entropy with ensemble techniques to quantify aleatoric, total and epistemic uncertainty, respectively.
Instead of training the classifier once on the training set $\trainset$, we now construct several bootstrap samples (obtained by sampling from $\trainset$ set with replacement to obtain a data set of the same size as $\trainset$) and each of these samples train a new classifier.
The number of bootstrap samples, and thus the number of classifiers we train, is denoted by \(\Nmodels\).
We denote the predicted joint probability mass functions of these different classifiers by \(\proba_i\), with \(i \in \{1,\dots,\Nmodels\}\). In our experiments in Section~\ref{sec:experiments}, we use $M=10$.

\paragraph{Aleatoric uncertainty}
The \emph{aleatoric uncertainty} is then estimated as follows:
\begin{equation}
	\aleatunc(\features) = - \frac{1}{\Nmodels} \sum_{i=1}^{\Nmodels} \sum_{\class \in \Classes} \proba_i(\class \vert \features) \log_2 \proba_i(\class \vert \features).
\end{equation}
This is the average entropy of the predicted conditional probability distributions over all the classifiers in the ensemble.
The reason this is typically associated with aleatoric uncertainty, is because averaging over the ensembles is believed to reduce the influence of the epistemic uncertainty as much as possible.

\paragraph{Total uncertainty}
The total uncertainty, on the other hand, is then estimated as the entropy of the average predicted conditional probability distribution, given by
\begin{equation}
	\totunc(\features) = - \sum_{\class \in \Classes} \proba_{\mathrm{av}}(\class\vert\features) \log_2 \proba_{\mathrm{av}}(\class\vert\features),
\end{equation}
with $\proba_{\mathrm{av}}(\class\vert\features)\coloneq\nicefrac{1}{\Nmodels} \sum_{m=1}^{\Nmodels} \proba_i(\class \vert \features)$.
So we see that this uncertainty metric is similar to $\entropy$, but with $\proba$ replaced by the average of the different $\proba_i$.

\paragraph{Epistemic uncertainty}
The epistemic uncertainty, finally, is taken to be the difference of the previous two metrics:
\begin{equation}
	\epistunc(\features) = \aleatunc(\features) - \totunc(\features).
\end{equation}
Underlying this metric is an assumption that the total uncertainty is the sum of the aleatoric and epistemic uncertainty.
This assumption seems questionable though, given that both types of uncertainty are intertwined.
%However, we could argue whether seeing the total uncertainty as the sum of the aleatoric and epistemic uncertainty is correct, since the epistemic uncertainty is, as we defined it, the uncertainty about the aleatoric uncertainty \(\testprob\).

\section{Robustness Quantification}
\label{sec:robustness}

To quantify the robustness of a prediction of a classifier, we need to somehow assess how much epistemic uncertainty this classifier can handle before that particular prediction will change. That is, how different $\testprob$ can be from $\prob$ while still providing the same prediction.
Inspired by ideas from the imprecise probabilities literature \cite{NIPS2014_09662890,correia2020robustclassificationdeepgenerative,pmlr-v62-mauá17a}, we will do this by artificially perturbing (the mass function $\proba$ associated with) the joint distribution \(\prob\), defining a robustness metric as the minimal perturbation for which the prediction is no longer robust.

% A possible approach to this is to add epistemic uncertainty to the model until the prediction changes.
% WHen epistemic uncertainty is added to a model, it is as if the model is altered or \emph{perturbed}.
% So, if we can quantify how much we need to perturb the model before its prediction changes, this could be used as a metric of robustness.

\subsection{Robustness w.r.t. a perturbation}
%Since a generative classifier is completely defined by its joint distribution \(\prob\)---at least if we ignore the case where several classes have the same maximal probability---we can perturb such a classifier by changing this distribution.
We define a perturbation of a mass function \(\proba\) as a set of mass functions that contains \(\proba\).
\begin{definition}

	Consider a mass function \(\proba\) on \(\Classes \times \FeaturesSet\).
	Let \(\probfamily\) be a compact set of mass functions on \(\Classes \times \FeaturesSet\), with \(\proba \in \probfamily\).
	Then we call \(\probfamily\) a \emph{perturbation} of \(\proba\).
\end{definition}
This definition is quite general, as it allows for a wide range of perturbation types.
Usually, however, a perturbation \(\probfamily\) of \(\proba\) will be a `neighborhood' around $\proba$, consisting of mass functions of a particular type whose distance from $\proba$ is in some sense bounded.

There are several ways in which such a perturbation could be obtained.
The most straightforward one, which we will adopt here, is to directly perturb $\proba$ itself.
However, we could also perturb the data from which the model is learned and, in this way, indirectly perturb $\proba$ during the learning process; we leave this for future work.

Since a perturbation \(\probfamily\) is a set of mass functions, we can associate with each such mass function \(\proba'\in\probfamily\) a generative classifier \(\classifier_{\proba'}\)---or multiple ones, if there is no unique maximizer in Equation~\eqref{eq:prediction}.
If the prediction \(\classifier_{\proba'}(\features)\) of all those classifiers \(\classifier_{\proba'}\) is the same as the class $\prediction=\classifier_\proba(\features)$ that is predicted by  \(\classifier_{\proba}\), we say that the prediction of $\classifier_\proba$ is \emph{robust} w.r.t. the perturbation $\probfamily$.
\begin{definition}
	\label{def:robustness}
	Let \(\classifierpa\) be a generative classifier corresponding to a mass function \(\proba\).
	Let \(\probfamily\) a perturbation of \(\proba\) and let \(\prediction\) be the prediction according to \(\classifierpa\) for the set of features \(\features\).
	Then \(\prediction\) is \emph{robust} w.r.t. the perturbation \(\probfamily\) if \(\arg \max_{\class \in \Classes} \pertproba(c, \features)=\{\prediction\}\) for all \(\pertproba \in \probfamily\).
\end{definition}
Since \(\probfamily\) contains \(\proba\), this definition requires in particular that \(\arg \max_{\class \in \Classes} \proba(c, \features)=\{\prediction\}\), meaning that if the class \(\prediction\) predicted by the original classifier \(\classifier_\proba\) does not uniquely have the highest probability, then \(\prediction\) can never be robust.

To check if a prediction is robust w.r.t. a perturbation \(\probfamily\), we do not need to explicitly check for each individual $\pertproba\in\probfamily$ whether it uniquely predicts \(\prediction\); fortunately, we can instead reformulate robustness as a maximization problem.
This result---and our proof---is entirely analogous to \cite[Theorem~1]{NIPS2014_09662890}, where a similar reformulation was presented for the case without observed features.
\begin{theorem}
	\label{th:def_robustness}
	Let \(\classifierpa\) be a generative classifier corresponding to a mass function \(\proba\).
	Let \(\probfamily\) be a perturbation of \(\proba\) and let \(\prediction\) be the prediction according to \(\classifierpa\) for the set of features \(\features\).
	Then \(\prediction\) is \emph{robust} w.r.t. the perturbation \(\probfamily\) if and only if
	\begin{equation}
		\label{eq:th_def_robustness}
		\min_{\pertproba \in \probfamily} \pertproba(\prediction, \features) > 0 \quad \text{and} \quad \max_{\class \in \Classes \backslash \{\prediction\}} \max_{\pertproba \in \probfamily} \frac{\pertproba(\class, \features)}{\pertproba(\prediction, \features)} < 1,
	\end{equation}
	where the first inequality should be checked first because if it fails, the fraction in the second inequality is undefined.
\end{theorem}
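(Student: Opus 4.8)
The plan is to unfold Definition~\ref{def:robustness} into a family of pointwise strict inequalities and then show that these are equivalent to the two displayed conditions, with compactness of \(\probfamily\) doing the real work at the two crucial points. By definition, \(\prediction\) is robust if and only if \(\arg\max_{\class\in\Classes}\pertproba(\class,\features)=\{\prediction\}\) for every \(\pertproba\in\probfamily\), which is the same as requiring that for every \(\pertproba\in\probfamily\) and every \(\class\in\Classes\setminus\{\prediction\}\) we have \(\pertproba(\prediction,\features)>\pertproba(\class,\features)\). I will establish the two implications separately, treating \eqref{eq:th_def_robustness} as the conjunction of its first and second inequality.

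For the easy direction, I would assume the two inequalities in \eqref{eq:th_def_robustness} and fix an arbitrary \(\pertproba\in\probfamily\). Since the minimum in the first inequality is strictly positive, so is \(\pertproba(\prediction,\features)\); hence the ratios appearing in the second inequality are well defined, and that second inequality gives \(\pertproba(\class,\features)/\pertproba(\prediction,\features)<1\), i.e.\ \(\pertproba(\class,\features)<\pertproba(\prediction,\features)\), for every \(\class\neq\prediction\). Thus \(\prediction\) is the unique maximizer for this \(\pertproba\), and since \(\pertproba\) was arbitrary the prediction is robust.

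For the converse I would assume robustness and fix any \(\pertproba\in\probfamily\). Uniqueness of the maximizer gives \(\pertproba(\prediction,\features)>\pertproba(\class,\features)\ge0\) for each \(\class\neq\prediction\) (using that there is at least one such class, which holds in any genuine classification problem), so \(\pertproba(\prediction,\features)>0\) holds pointwise on \(\probfamily\). To upgrade this to \(\min_{\pertproba\in\probfamily}\pertproba(\prediction,\features)>0\), I invoke that \(\pertproba\mapsto\pertproba(\prediction,\features)\) is continuous, being a coordinate evaluation, and that \(\probfamily\) is compact, so the minimum is attained at some element of \(\probfamily\) and there inherits the strict positivity; this yields the first inequality. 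Given that first inequality, each map \(\pertproba\mapsto\pertproba(\class,\features)/\pertproba(\prediction,\features)\) is continuous on all of \(\probfamily\) (the denominator is now bounded away from \(0\)), so by compactness its maximum over \(\probfamily\) is attained, and by finiteness of \(\Classes\) the outer maximum over \(\class\neq\prediction\) is attained too; since every value of this ratio is \({<}1\) by pointwise uniqueness, the attained maximum is also \({<}1\), which is the second inequality.

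The main obstacle, though a mild one, is exactly this passage from pointwise strict inequalities to strict extrema. A family of strict inequalities \(g(\pertproba)>0\) holding for each individual \(\pertproba\) does not in general force \(\inf_{\pertproba\in\probfamily}g(\pertproba)>0\); it is the compactness of \(\probfamily\) together with the continuity of \(g\) that turns the infimum into an attained minimum, which then inherits strict positivity. The same remark governs the supremum in the second condition, with the extra wrinkle that one must first secure the first inequality so that the ratio is continuous (and not merely defined) on \(\probfamily\) — which is precisely why the statement stipulates that the first inequality be verified before the second.
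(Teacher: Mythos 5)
Your proof is correct and follows essentially the same route as the paper's: unfold Definition~\ref{def:robustness} into the pointwise strict inequalities \(\pertproba(\class,\features)<\pertproba(\prediction,\features)\) for all \(\pertproba\in\probfamily\) and \(\class\neq\prediction\), derive strict positivity of \(\pertproba(\prediction,\features)\) from nonnegativity, and use compactness (with continuity of the relevant maps) to pass between the pointwise conditions and the attained min/max in \eqref{eq:th_def_robustness}. If anything, you are slightly more explicit than the paper, which leaves the continuity of the coordinate evaluations and of the ratio implicit when invoking compactness.
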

\begin{proof}
	By definition, \(\prediction\) is robust w.r.t. \(\probfamily\) if and only if \(\arg \max_{\class \in \Classes} \pertproba(c, \features)=\{\prediction\}\) for all \(\pertproba \in \probfamily\).
	This is clearly the case if and only if
	\begin{equation}
		\label{eq:proof_robustness}
		\forall \pertproba \in \probfamily, \forall \class \in \Classes \backslash \prediction \colon \pertproba(\class, \features) < \pertproba(\prediction, \features).
	\end{equation}
	Since $\pertproba(\class, \features)\geq0$, this can only be true if \(\pertproba(\prediction, \features)>0\) for each \(\pertproba \in \probfamily\), or equivalently, due to the compactness of \(\probfamily\), if the first inequality of Equation \eqref{eq:th_def_robustness} holds.
	Assuming that it holds, Equation \eqref{eq:proof_robustness} can now be rewritten as
	\begin{equation}
		\max_{\class \in \Classes \backslash \{\prediction\}} \max_{\pertproba \in \probfamily} \frac{\pertproba(\class, \features)}{\pertproba(\prediction, \features)} < 1,
	\end{equation}
	where the compactness of \(\probfamily\) ensures that the maximum is well-defined.
\end{proof}
%In the paper where this idea of instance-based robustness was mentioned for the first time, \citet{NIPS2014_09662890} proved a similar result for mass functions over a single variable instead of joint probability distributions.{}
%Our result and proof are based on theirs, but adapted to our setting.

From the point of view of imprecise probabilities~\cite{augustin2014introduction}, our notion of robustness is closely related to credal---or imprecise---classification
\cite{ITIPclassification}.
Starting from a set of probabilities $\probfamily$---or a perturbation, in our language---this approach to classification provides a set-valued prediction
%\begin{equation}
	%\label{eq:imprecise_prediction}
	$\cup_{\pertproba\in\probfamily}\arg\max_{c\in\Classes}\pertproba(\class, \features)$
%\end{equation}
that gathers the predictions $\classifier_{\pertproba}(\features)$ of every possible classifier $\classifier_\pertproba$ that corresponds to a joint mass function $\pertproba\in\probfamily$.
Robustness then corresponds to the specific situation where this (possibly) set-valued prediction consists of only one class, which is then necessarily equal to $\prediction$.

\subsection{Robustness metrics}\label{sec:robustnessmetrics}
To be able to numerically quantify the robustness of a prediction as the minimal perturbation for which it is no longer robust, we need to somehow express what the size of perturbation is.
To this end, we will make use of parametrized perturbations.
\begin{definition}
	\label{def:parametrized_pert}
	Consider a probability mass function \(\proba\) on \(\Classes\times\FeaturesSet\) and, for all \(\robmetric\in[0,1]\), a perturbation \(\probfamily_{\robmetric}\) of \(\proba\). Then the family $\bulletprobfamily\coloneq(\probfamily_{\robmetric})_{\robmetric\in[0,1]}$ is called a \emph{parametrized perturbation} of \(\proba\) if it satisfies the following conditions:
	\begin{itemize}
		\item if \(\robmetric = 0\), then \(\probfamily_{0}=\{\proba\}\);
		\item if \(\robmetric_1 < \robmetric_2\), then \(\probfamily_{\robmetric_1} \subset \probfamily_{\robmetric_2}\).
	\end{itemize}
\end{definition}
So the bigger the parameter \(\robmetric\), the bigger the perturbation.
The idea is now to increase \(\robmetric\) until the prediction is no longer robust w.r.t. \(\probfamily_{\robmetric}\), and to use the value of \(\robmetric\) for which this happens as a robustness metric.
\begin{definition}
	\label{def:robustness_metric}
	Let \(\classifierpa\) be a generative classifier corresponding to a mass function \(\proba\), and let \(\prediction\) be the prediction according to \(\classifierpa\) for the set of features \(\features\).
	Let \(\bulletprobfamily\) be a parametrized perturbation of \(\proba\).
	Then the \emph{robustness metric} $\robmetric_{\bulletprobfamily}(\features)$---if it exists---is the smallest value of $\robmetric$ for which \(\prediction\) is no longer robust w.r.t. \(\probfamily_{\robmetric}\).
\end{definition}

An important advantage of this approach, compared to both (epistemic) uncertainty quantification and credal classification, is that it avoids the problem of determining which perturbation \(\probfamily_{\robmetric}\)---or which $\robmetric\in[0,1]$---is `correct'.

That said, there are of course many ways to construct a parametrized perturbation \(\bulletprobfamily\), and hence many robustness metrics that can be obtained in this way.
The ones we will focus on are based on so-called \emph{\(\robmetric\)-contamination}~\cite{berger2013statistical}.%, also known as \emph{linear vacuous mixture}.
\begin{definition}\label{def:epsiloncontamination}
	Consider a probability mass function \(\proba\) on a set \(\mathcal{X}\), and let \(\Sigma_{\mathcal{X}}\) be the set of all possible probability mass functions on \(\mathcal{X}\).
	Then for any \(\robmetric \in [0,1]\), we define the \(\robmetric\)-contamination of \(\proba\) as the family of all mass functions that are convex mixtures, with mixture coefficient $\robmetric$, of $\proba$ and an arbitrary element of $\Sigma_{\mathcal{X}}$:
	\begin{equation}
		\label{eq:eps_contamination}
		\probmeasfamily{\proba,\,\robmetric} = \{(1-\robmetric)\proba + \robmetric\proba^* \colon \proba^* \in \Sigma_X\}.
	\end{equation}
\end{definition}

In particular, a first type of family of perturbations that we will consider, is the one obtained by directly applying $\robmetric$-contamination to the learned probability mass function $\proba$.
We call this family the \emph{global} parametrized perturbation of \(\proba\) and denote it by $\bulletprobfamilyPa^{\,\mathrm{glob}}$.
For all $\robmetric\in[0,1]$, the corresponding perturbation of $\proba$ is given by
\begin{equation}
	\label{eq:globpert}
	\globperta \coloneq \probmeasfamily{\proba,\,\robmetric}.
\end{equation}
We call the robustness metric that corresponds to $\bulletprobfamilyPa^{\,\mathrm{glob}}$ the \emph{global robustness metric} and, for notational convenience, denote it by $\globeps\coloneq\robmetric_{\smash{\bulletprobfamilyPa^{\,\mathrm{glob}}}}$. Our next result provides a closed-form expression.
%The robustness metric \(\globeps\) is thus the smallest (non-negative) value for which the prediction \(\prediction=\classifierpa(\features)\) is no longer robust w.r.t. the global parametrized perturbation \(\globperta\).
\begin{theorem}
	\label{th:global_robustness}
	Consider a set of features \(\features\) and let \(\prediction\) be the prediction according to a generative classifier \(\classifierpa\) with joint probability mass function \(\proba\).
	Then
	\begin{equation}\label{eq:closedformforglob}
		\globeps(\features) = \frac{\proba(\prediction, \features) - \max_{\class \in \Classes \backslash \prediction}\proba(\class, \features)}{1 + \proba(\prediction, \features) -  \max_{\class \in \Classes \backslash \prediction}\proba(\class, \features)}.
	\end{equation}
\end{theorem}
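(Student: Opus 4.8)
The plan is to reduce everything to the maximisation characterisation of robustness from Theorem~\ref{th:def_robustness} and then to carry out the inner optimisation explicitly, exploiting the simple structure of $\robmetric$-contamination. Concretely, every element of $\globperta$ is of the form $\pertproba = (1-\robmetric)\proba + \robmetric\proba^*$ with $\proba^*$ an arbitrary mass function on $\Classes\times\FeaturesSet$, so that for every atom $(\class,\features)$ we have $\pertproba(\class,\features) = (1-\robmetric)\proba(\class,\features) + \robmetric\proba^*(\class,\features)$. Since the robustness metric $\globeps(\features)$ is, by Definition~\ref{def:robustness_metric}, the smallest $\robmetric$ for which $\prediction$ is no longer robust, and robustness is governed by the two inequalities in Equation~\eqref{eq:th_def_robustness}, my strategy is to determine, as a function of $\robmetric$, exactly when these inequalities switch from holding to failing.

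First I would handle the second (ratio) inequality, which is the one that binds. Fixing a competing class $\class\neq\prediction$, I would maximise the ratio $\pertproba(\class,\features)/\pertproba(\prediction,\features)$ over $\pertproba\in\globperta$, i.e. over the free mass function $\proba^*$. The key observation is that numerator and denominator can be optimised simultaneously by a single choice of $\proba^*$: placing all of the contaminating mass on the single atom $(\class,\features)$ makes $\proba^*(\class,\features)=1$ and $\proba^*(\prediction,\features)=0$, which at once maximises the numerator and minimises the denominator. This yields the worst-case ratio $\bigl[(1-\robmetric)\proba(\class,\features)+\robmetric\bigr]/\bigl[(1-\robmetric)\proba(\prediction,\features)\bigr]$. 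Maximising over $\class\neq\prediction$ then simply replaces $\proba(\class,\features)$ by $\max_{\class\in\Classes\backslash\prediction}\proba(\class,\features)$, since the denominator is independent of $\class$ and the numerator is increasing in $\proba(\class,\features)$, so the worst competitor is the most probable one.

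Next I would impose the robustness requirement that this worst-case ratio stays below $1$ and solve the resulting linear inequality for $\robmetric$. Writing $p\coloneq\proba(\prediction,\features)$ and $m\coloneq\max_{\class\in\Classes\backslash\prediction}\proba(\class,\features)$, the condition $(1-\robmetric)m+\robmetric<(1-\robmetric)p$ rearranges to $\robmetric<\nicefrac{(p-m)}{(1+p-m)}$, so robustness holds precisely below this threshold and fails at it. By Definition~\ref{def:robustness_metric} the robustness metric is therefore exactly this threshold, which is the claimed Equation~\eqref{eq:closedformforglob}. I would close by checking that the first inequality of Equation~\eqref{eq:th_def_robustness} never binds before the second: its worst case is $\min_{\pertproba\in\globperta}\pertproba(\prediction,\features)=(1-\robmetric)p$, which stays strictly positive for all $\robmetric<1$ (recall $p>0$ since $\prediction$ is the prediction), while the threshold above is always strictly smaller than $1$.

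The main obstacle I anticipate is rigorously justifying the joint optimisation in the second step, namely that one and the same extreme $\proba^*$ simultaneously maximises the numerator and minimises the denominator, rather than these two goals requiring different, competing choices of $\proba^*$. This needs only a short monotonicity argument: the ratio is increasing in $\proba^*(\class,\features)$ and decreasing in $\proba^*(\prediction,\features)$, and since these are distinct coordinates of the free mass function there is no tension between pushing one up and the other down to their extremes. A secondary point of care is that the optimum over $\proba^*$ is attained (so that the maxima in Equation~\eqref{eq:th_def_robustness} are genuine maxima, as guaranteed by compactness of $\globperta$) and that the strict-versus-weak inequality at the boundary is tracked correctly, so that $\globeps(\features)$ is the threshold itself and not merely its supremum.
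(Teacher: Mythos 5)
Your proposal follows essentially the same route as the paper's proof: reduce the problem to the characterisation of Theorem~\ref{th:def_robustness}, evaluate the inner maximum by concentrating all of the contaminating mass $\proba^*$ on the single atom $(\class,\features)$ (which simultaneously maximises the numerator and minimises the denominator, as you note), maximise over the competing classes, and solve the resulting linear inequality in $\robmetric$ to obtain the threshold $\nicefrac{(p-m)}{(1+p-m)}$. That part of the argument is correct, including the handling of strict versus weak inequalities and the observation that the threshold lies strictly below $1$, so that the first inequality of Equation~\eqref{eq:th_def_robustness} never binds first.

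There is, however, one genuine gap: your parenthetical justification that $p=\proba(\prediction,\features)>0$ ``since $\prediction$ is the prediction''. This is not valid in the paper's setting. The prediction is defined through Equation~\eqref{eq:joint_prediction} as an arbitrary element of $\arg\max_{\class\in\Classes}\proba(\class,\features)$, and the paper explicitly allows the degenerate case $\proba(\features)=0$, in which $\proba(\class,\features)=0$ for \emph{every} class and $\prediction$ is an arbitrarily chosen class; being the prediction therefore does not guarantee positive joint probability. In that case your argument breaks down: the ratio in Equation~\eqref{eq:th_def_robustness} is undefined, and indeed the first inequality already fails at $\robmetric=0$, because $\arg\max_{\class\in\Classes}\proba(\class,\features)=\Classes$ is not a singleton, so $\prediction$ is not even robust w.r.t. $\globpertzeroa=\{\proba\}$. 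The conclusion still matches the claimed formula---one gets $\globeps(\features)=0$, and the right-hand side of Equation~\eqref{eq:closedformforglob} evaluates to $\nicefrac{0}{1}=0$---but this must be verified separately, which is exactly what the first paragraph of the paper's proof does. To repair your proof, keep your main argument for the case $\proba(\prediction,\features)>0$ and replace the incorrect claim by a short separate treatment of the case $\proba(\prediction,\features)=0$.
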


\begin{proof}
If $\proba(\prediction, \features)=0$, it follows from Equation~\eqref{eq:joint_prediction} that $\proba(\class, \features)=0$ for all $\class\in\Classes$, which implies that $\prediction$ is not robust for $\smash{\globpertzeroa=\{\proba\}}$ because $\arg\max_{\class\in\Classes}\proba(\class,\features)=\Classes$ is not a singleton. So in this case, we should have that $\globeps(\features)=0$, which indeed corresponds to Equation~\eqref{eq:closedformforglob}.

%Equation~\eqref{eq:closedformforglob} then claims that $\globeps(\features)=0$. This is indeed correct: $\prediction$ is not robust for $\smash{\globpertzero=\{P\}}$. So in this case,

This leaves us with the case $\proba(\prediction, \features)>0$. To show that~\eqref{eq:closedformforglob} is correct in that case too, we fix any $\robmetric\in[0,1)$.
Since $\proba(\prediction, \features)>0$, it follows from Equation~\eqref{eq:globpert} and Definition~\ref{def:epsiloncontamination} that $\pertproba(\prediction,\features)>0$ for all $\smash{\pertproba\in\globperta}$. Theorem \ref{th:def_robustness}, Equation~\eqref{eq:globpert} and Definition~\ref{def:epsiloncontamination} therefore imply that $\prediction$ is not robust w.r.t. $\globperta$ if and only if
\begin{equation}\label{eq:intermediateinequality}
	\max_{\class \in \Classes \backslash \{\prediction\}} \max_{\proba^* \in \Sigma_{\Classes \times \FeaturesSet}} \frac{(1-\robmetric)\proba(\class, \features)+\robmetric\proba^*(\class, \features)}{(1-\robmetric)\proba(\prediction, \features)+\robmetric\proba^*(\prediction, \features)}\geq1,
\end{equation}
where \(\Sigma_{\Classes \times \FeaturesSet}\) is the set of all mass functions on \(\Classes\times\FeaturesSet\). Since the inner maximum is clearly attained by the unique distribution $\proba^*\in\Sigma_{\Classes \times \FeaturesSet}$ that assigns probability one to $(\class,\features)$---and hence zero probability to $(\prediction,\features)$---the left-hand side of Equation~\eqref{eq:intermediateinequality} simplifies to
\begin{equation}
	\max_{\class \in \Classes \backslash \{\prediction\}} \frac{(1-\robmetric)\proba(\class, \features)+\robmetric}{(1-\robmetric)\proba(\prediction, \features)}
	= \frac{(1-\robmetric)\displaystyle \max_{\class \in \Classes \backslash \{\prediction\}} \proba(\class, \features)+\robmetric}{(1-\robmetric)\proba(\prediction, \features)}
\end{equation}
Rearranging the terms, it therefore follows that Equation~\eqref{eq:intermediateinequality} holds---or equivalently, that $\prediction$ is not robust w.r.t. $\globperta$---if and only if
\begin{equation}
	\frac{\proba(\prediction, \features) - \max_{\class \in \Classes \backslash \{\prediction\}} \proba(\class, \features)}{1 + \proba(\prediction, \features) - \max_{\class \in \Classes \backslash \{\prediction\}} \proba(\class, \features)}\leq\robmetric.
\end{equation}
Furthermore, since we know from Equation~\eqref{eq:joint_prediction} that \(0\leq\proba(\prediction, \features) - \max_{\class \in \Classes \backslash \prediction}\proba(\class, \features)\leq 1\), the left-hand side of this inequality is at most $\nicefrac{1}{2}$. The smallest value of $\varepsilon\in[0,1)$---and hence also of $\varepsilon\in[0,1]$---for which $\prediction$ is not robust w.r.t. $\globperta$ is therefore equal to this left-hand side. This is exactly the value given by Equation~\eqref{eq:closedformforglob}.
\end{proof}

It follows from this result that our global robustness metric also has an alternative, simple and perhaps more intuitive interpretation that does not make use of perturbations: as can be seen from Equation~\eqref{eq:closedformforglob}, \(\globeps\) turns out to be a monotone transformation of the difference $\proba(\prediction, \features) - \max_{\class \in \Classes \backslash \prediction}\proba(\class, \features)$ between the joint probabilities of the most likely and second most likely classes.
%From this viewpoint, this metric can also be seen as a way of quantifying robustness in the sense that it could be seen as the minimum amount of uncertainty that needs to be added before the second most likely class becomes the most likely one, and so, even from this viewpoint this metric aligns with our approach to robustness quantification.

% The global robustness metric has also a second, more intuitive interpretation that is not in terms of perturbations of probability mass functions.
% The metric \(\globeps\) is also a monotone transformation of the difference between the joint probabilities of the most likely and second most likely classes.
% From this viewpoint, this metric can also be seen as a way of quantifying robustness in the sense that it could be seen as the minimum amount of uncertainty that needs to be added before the second most likely class becomes the most likely one, and so, even from this viewpoint this metric aligns with our approach to robustness quantification.

\section{Robustness quantification for the Naive Bayes Classifier}
\label{sec:naive_classification}

An important advantage of the parametrized perturbation $\smash{\globperta}$, and the corresponding robustness metric $\globeps$, is that they can be applied to any generative classifier. %, be it at the cost of being rather simplistic.
For specific types of classifiers, however, we can also consider more specific types of perturbations that are tailor-made, as we are about to illustrate for the simple case of the Naive Bayes Classifier~\cite{naiveBayes}.

\subsection{Naive Bayes Classifier}

%As an example test case, we consider naive classifiers.
%These classifiers are often used as a baseline for more complex classifiers because of their simplicity and efficiency.
%Later on, this study could further be expanded to other, more complex generative classifiers.
The central assumption on which the Naive Bayes Classifier (NBC) is based is that the features are conditionally independent given the class. For all classes \(\class \in \Classes\) and features \(\features \in \FeaturesSet\), this implies that the joint probability mass function can be factorized as
\begin{equation}
	\label{eq:NB_joint}
	\proba(\class, \features) = \proba(\class) \prod_{i=1}^{\numfeatures} \proba(\feature{i}\vert\class),
\end{equation}
where \(\proba(\class) \coloneq \sum_{\features \in \FeaturesSet}\proba(\class, \features)\) is the marginal probability of $\class$ and \(\proba(\feature{i}\vert\class)\) is the conditional probability of the feature \(\feature{i}\) given \(\class\).
We collect all probability mass functions on $\Classes\times\FeaturesSet$ that satisfy Equation~\eqref{eq:NB_joint} in the set \(\smash{\Sigma_{\Classes \times \FeaturesSet}^{\mathrm{NBC}}}\).
Every mass function $p$ in \(\Sigma_{\Classes \times \FeaturesSet}^{\mathrm{NBC}}\) is completely determined by a local mass function \(
\proba_{\Class}\) on \(\Classes\), defined by \(\proba_{\Class}(\class) \coloneq \proba(\class)\) for all $\class\in\Classes$, and, for all $\class\in\Classes$, a local mass function \(\proba_{\Feature{i}\vert\class}\) on \(\FeatureSet{i}\), defined by \(\proba_{\Feature{i}\vert\class}(\feature{i}) \coloneq \proba(\feature{i}\vert\class)\) for all $\feature{i}\in\FeatureSet{i}$.

The main advantage of assuming independence of the features given the class is that we do not need to learn the joint probability mass function $\proba$ as a whole---which is often high-dimensional---but can focus on learning the---typically low-dimensional---local probability mass functions \(\proba_{\Class}\) and \(\proba_{\Feature{i}\vert\class}\) instead.
In the experiments in Section~\ref{sec:experiments}, the probabilities that make up these local mass functions are obtained as follows:
\begin{equation}
	\proba(\class) = \frac{n(\class) + \alpha}{n + \alpha \vert\Classes\vert}\ \ \text{ and }\ \ \proba(\feature{i}\vert\class) = \frac{n(\class, \feature{i}) + \alpha}{n(\class) + \alpha \vert\FeatureSet{i}\vert},
\end{equation}
where \(n\) is the total number of training instances, \(n(\class)\) is the number of instances with class \(\class\),
and \(n(\class, \feature{i})\) is the number of instances with class \(\class\) and feature value \(\feature{i}\).
These expressions correspond to the posterior predictive distributions of the Dirichlet-multinomial model~\cite{edition2013bayesian}.
For $\alpha=0$, the obtained probabilities are simply the observed relative frequencies of the different classes, and of the features given each of the classes. The addition of a small additive smoothing parameter $\alpha>0$ avoids that the denominator becomes zero---if not all combinations of classes and features are present in the data---and protects against overfitting.
The exact value of $\alpha$ is determined by optimizing it, using 5-fold cross validation on the training set.

The downside that comes with this advantage, on the other hand, is that the assumption that the features are conditionally independent given the class is often unrealistic---or `naive'---in practice.
Enforcing this assumption anyway will then make it impossible for $\prob$ to closely resemble $\trainprob$.
Nevertheless, and perhaps surprisingly, the performance of NBCs is often competitive with other, more complicated types of classifiers~\cite{Domingos1997,Friedman1997}.

\subsection{Two types of perturbations}
For the particular case of an NBC with mass function \(\proba\), we now consider two types of perturbations and their corresponding robustness metrics.

\paragraph{Global perturbations}
The first family of perturbations is the global parametrized perturbation $\bulletprobfamilyPa^{\,\mathrm{glob}}$ of Section~\ref{sec:robustnessmetrics}. The expression for the corresponding robustness metric $\globeps$ does not simplify further for the specific case of an NBC, and is still given by Equation~\eqref{eq:closedformforglob}, in which we can easily evaluate $\proba(\prediction, \features)$ and $\proba(\class, \features)$ with Equation~\eqref{eq:NB_joint}.

%, with $\prob$ as in Equation

% The first one is a global perturbation, where the entire joint probability distribution is perturbed.
% This perturbation is the one we have already defined in the previous section in Equation \eqref{eq:globpert} that corresponds to the \(\globeps\) robustness metric.
% In the case of a joint probability distribution \(\nbprob\) learned by an NBC, the expression does not simplify further, and so it is given by
% \begin{equation*}
% 	\globeps = \frac{\nbprob(\prediction, \features) - \max_{\class \in \Classes \backslash \prediction}\nbprob(\class, \features)}{1 + \nbprob(\prediction, \features) -  \max_{\class \in \Classes \backslash \prediction}\nbprob(\class, \features)}.
% \end{equation*}

\paragraph{Local perturbations}
The second family of perturbations takes inspiration from the set of probabilities that defines a Naive Credal Classifier~\cite{ZAFFALON20025}---an imprecise-probabilistic generalization of an NBC.
It is tailor-made for NBCs, as it makes use of the fact that $\proba$ factorizes as in Equation~\eqref{eq:NB_joint}. In particular, for each $\robmetric\in[0,1]$, we let
\begin{equation}\label{eq:localperturbation}
	\locperta \coloneq \Big\{ \pertproba \in \Sigma_{\Classes \times \FeaturesSet}^{\mathrm{NBC}} \colon \pertproba_{\Class} \in \probmeasfamily{\proba_{\Class},\,\robmetric}, \pertproba_{\Feature{i}\vert\class} \in \probmeasfamily{\proba_{\Feature{i}\vert\class},\,\robmetric} \Big\}
\end{equation}
be the perturbation of $\proba$ obtained by separately perturbing each of local mass functions with $\robmetric$-contamination.\footnote{This is not the only possibility. We could for example also perturb each of the local mass functions of the NBC by a different amount \(\varepsilon_i\) and then let $\epsilon$ be the sum of these different amounts.}
%  It is obtained by perturbing each of the local factors Equation \eqref{eq:NB_joint}.
% The family parametrized perturbations is then given by
% This perturbation is unique to naive classifiers, since it makes use of the fact that the joint factorizes into a product.
We call the resulting family $\bulletprobfamilyPa^{\,\mathrm{loc}}$ the \emph{local} parametrized perturbation of \(\proba\) and call the corresponding robustness metric the \emph{local robustness metric}. For notational convenience, denote it by $\loceps\coloneq\robmetric_{\smash{\bulletprobfamilyPa^{\,\mathrm{loc}}}}$. Our next result gives a convenient characterization.
%Let's now define a robustness metric based on this parametrized perturbation.
%The \emph{local} \(\robmetric\) or \(\loceps\) is defined as the smallest \(\robmetric\) for which the prediction \(\prediction=\classifiernb(\features)\), given \(\features\), is no longer robust w.r.t. the global parametrized perturbation \(\locperta\).
\begin{theorem}
	\label{th:local_robustness}
Consider a set of features \(\features\) and let \(\prediction\) be the prediction according to an NBC \(\classifierpa\) whose joint probability mass function \(\proba\) factorizes according to Equation~\eqref{eq:NB_joint}.
Then $\phi\colon[0,1)\to\mathbb{R}_{\geq0}$, defined for all $\robmetric\in[0,1)$ by
	\begin{equation}
		\phi(\varepsilon)\coloneq\max_{\class \in \Classes \backslash \{\prediction\}} \Big(\proba(\class) + \frac{\robmetric}{1-\robmetric} \Big) \prod_{i=1}^{\numfeatures} \Big(\proba(\feature{i} \vert \class) + \frac{\robmetric}{1-\robmetric} \Big),
	\end{equation}
	is a strictly increasing function and $\loceps(\features)$ is the unique value of $\robmetric$ for which $\phi(\robmetric)=\proba{(\prediction, \features)}$.
\end{theorem}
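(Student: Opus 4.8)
The plan is to invoke Theorem~\ref{th:def_robustness} and reduce the robustness question to an analysis of $\max_{\pertproba\in\locperta}\pertproba(\class,\features)/\pertproba(\prediction,\features)$, and then to read off $\loceps(\features)$ from Definition~\ref{def:robustness_metric}. First I would dispose of the degenerate case $\proba(\prediction,\features)=0$ exactly as in the proof of Theorem~\ref{th:global_robustness}: Equation~\eqref{eq:joint_prediction} then forces $\proba(\class,\features)=0$ for all $\class$, so $\prediction$ is already not robust at $\robmetric=0$ and $\loceps(\features)=0$, which matches $\phi(0)=\max_{\class\neq\prediction}\proba(\class,\features)=0=\proba(\prediction,\features)$. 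For the remainder I assume $\proba(\prediction,\features)>0$. Writing $\pertproba(\prediction,\features)$ as the product of its independently contaminated local factors, its minimum over $\locperta$ equals $(1-\robmetric)^{\numfeatures+1}\proba(\prediction,\features)$, which is strictly positive for every $\robmetric\in[0,1)$; hence the first inequality of Theorem~\ref{th:def_robustness} always holds and robustness is governed entirely by the second.

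The heart of the argument is to evaluate, for a fixed competitor $\class\neq\prediction$, the inner maximum $\max_{\pertproba\in\locperta}\pertproba(\class,\features)/\pertproba(\prediction,\features)$. Using the factorisation~\eqref{eq:NB_joint}, this ratio splits as $\frac{\pertproba_{\Class}(\class)}{\pertproba_{\Class}(\prediction)}\prod_{i=1}^{\numfeatures}\frac{\pertproba_{\Feature{i}\vert\class}(\feature{i})}{\pertproba_{\Feature{i}\vert\prediction}(\feature{i})}$. The key observation, which I expect to be the main obstacle to phrase cleanly, is that the local contaminations in Equation~\eqref{eq:localperturbation} are selected \emph{independently} and that every distribution appearing in a numerator differs from every one appearing in a denominator (in particular $\pertproba_{\Feature{i}\vert\class}$ and $\pertproba_{\Feature{i}\vert\prediction}$ condition on different classes). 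Therefore the maximum of this product of positive factors equals the product of the individual maxima. For each factor I would then argue that, precisely because $\class\neq\prediction$, the numerator-maximising and denominator-minimising contaminations from Definition~\ref{def:epsiloncontamination} are simultaneously attainable: placing all contamination mass on the relevant value gives $(1-\robmetric)\proba(\,\cdot\,)+\robmetric$ in each numerator, and none on it gives $(1-\robmetric)\proba(\,\cdot\,)$ in each denominator. Cancelling the common factors $(1-\robmetric)$ and setting $t\coloneq\robmetric/(1-\robmetric)$ yields $\frac{\proba(\class)+t}{\proba(\prediction)}\prod_i\frac{\proba(\feature{i}\vert\class)+t}{\proba(\feature{i}\vert\prediction)}$.

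Multiplying the second inequality of Theorem~\ref{th:def_robustness} through by the positive quantity $\proba(\prediction,\features)=\proba(\prediction)\prod_i\proba(\feature{i}\vert\prediction)$, I obtain that $\prediction$ fails to be robust w.r.t. $\locperta$ exactly when $\phi(\robmetric)\geq\proba(\prediction,\features)$, with $\phi$ the stated maximum over $\class\neq\prediction$. It then remains to study $\phi$. Since $t=\robmetric/(1-\robmetric)$ is a strictly increasing bijection from $[0,1)$ onto $[0,\infty)$, and each candidate $\psi_{\class}(t)\coloneq(\proba(\class)+t)\prod_i(\proba(\feature{i}\vert\class)+t)$ is a product of nonnegatively shifted factors and hence strictly increasing in $t\geq0$, the pointwise maximum $\phi$ is continuous and strictly increasing on $[0,1)$. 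Finally I would locate the crossing: $\phi(0)=\max_{\class\neq\prediction}\proba(\class,\features)\leq\proba(\prediction,\features)$ because $\prediction$ is an argmax, while $\phi(\robmetric)\to\infty$ as $\robmetric\to1$ (each $\psi_{\class}$ grows like $t^{\numfeatures+1}$). By the intermediate value theorem together with strict monotonicity there is a unique $\robmetric^\star\in[0,1)$ with $\phi(\robmetric^\star)=\proba(\prediction,\features)$; strict monotonicity makes $\prediction$ robust for every $\robmetric<\robmetric^\star$ and not robust at $\robmetric^\star$, so $\loceps(\features)=\robmetric^\star$, which is exactly the claimed characterisation.
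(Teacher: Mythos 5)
Your proof is correct and follows essentially the same route as the paper's: dispose of the degenerate case \(\proba(\prediction,\features)=0\) first, invoke Theorem~\ref{th:def_robustness}, exploit the independence of the local contaminations to factorize the maximal ratio into per-factor extremes of the form \((1-\robmetric)\proba(\cdot)+\robmetric\) over \((1-\robmetric)\proba(\cdot)\), and thereby reduce non-robustness to \(\phi(\robmetric)\geq\proba(\prediction,\features)\). The only cosmetic difference is how the crossing point is located: you use continuity, the bound \(\phi(0)\leq\proba(\prediction,\features)\) and the divergence of \(\phi\) as \(\robmetric\to1\) together with the intermediate value theorem, whereas the paper simply observes that \(\phi(\nicefrac{1}{2})\geq1\geq\proba(\prediction,\features)\); both arguments are valid.
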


\begin{proof}
Since all the factors inside the maximum are nonnegative and strictly increasing in $\robmetric$, the same is clearly true for $\phi(\robmetric)$ itself. So it suffices to show that $\phi(\loceps(\features))=\proba{(\prediction, \features)}$.

If $\proba(\prediction, \features)=0$, it follows from Equation~\eqref{eq:joint_prediction} that $\proba(\class, \features)=0$ for all $\class\in\Classes$, which implies that $\prediction$ is not robust for $\smash{\locpertzeroa=\{\proba\}}$ because $\arg\max_{\class\in\Classes}\proba(\class,\features)=\Classes$ is not a singleton. So $\loceps(\features)=0$ and therefore
\begin{equation}
\phi(\loceps(\features))
=\phi(0)
=\max_{\class \in \Classes \backslash \{\prediction\}}\proba(\class,\features)=0=\proba{(\prediction, \features)},
\end{equation}
using Equation~\eqref{eq:NB_joint} for the second equality.

This leaves us with the case $\proba(\prediction, \features)>0$. To show that $\phi(\loceps(\features))=\proba{(\prediction, \features)}$ in that case too, we fix any $\robmetric\in[0,1)$.
Since $0<\proba(\prediction,\features)=\proba(\prediction)\prod_{i=1}^N\proba(\feature{i}\vert\prediction)$, it follows from Equation~\eqref{eq:localperturbation} and Definition~\ref{def:epsiloncontamination} that $\pertproba(\prediction,\features)>0$ for all $\smash{\pertproba\in\locperta}$. It therefore follows from Theorem \ref{th:def_robustness} that $\prediction$ is not robust w.r.t. $\locperta$ if and only if
	\begin{equation}\label{eq:intermediateinequality2}
		\max_{\class \in \Classes \backslash \{\prediction\}} \max_{\pertproba \in \locperta} \frac{\pertproba(\class, \features)}{\pertproba(\prediction, \features)} \geq 1.
	\end{equation}
	Ignoring the maximum over the classes for now, rewriting the innermost maximum with Equation~\eqref{eq:localperturbation} yields
	\begin{equation}
		\max_{\pertproba_{\Class}\in\probmeasfamily{\proba_{\Class},\,\robmetric}} \frac{\pertproba_{\Class}(\class)}{\pertproba_{\Class}(\prediction)}  \prod_{i=1}^{\numfeatures}
		\frac{\max_{\pertproba_{\Feature{i}\vert\class}\in\probmeasfamily{\proba_{\Feature{i}\vert\class},\,\robmetric}} \pertproba_{\Feature{i}\vert\class}(\feature{i})}{\min_{\pertproba_{\Feature{i}\vert\prediction}\in\probmeasfamily{\proba_{\Feature{i}\vert\prediction},\,\robmetric}} \pertproba_{\Feature{i}\vert\class}(\feature{i})}.
	\end{equation}
	For the first maximum in this expression, we find that
	\begin{align}
		\max_{\proba_{\Class}\in\probmeasfamily{\proba_{\Class},\,\robmetric}} \frac{\pertproba_{\Class}(\class)}{\pertproba_{\Class}(\prediction)}
		&= \max_{\proba^* \in \Sigma_{\Classes}} \frac{(1-\robmetric)\proba_{\Class}(\class) + \robmetric\proba^*(\class)}{(1-\robmetric)\proba_{\Class}(\prediction) + \robmetric\proba^*(\prediction)}\\
		&=\frac{(1-\robmetric)\proba(\class) + \robmetric}{(1-\robmetric)\proba(\prediction)},
	\end{align}
	 where the maximum was attained by the unique mass function $\proba^*\in\Sigma_{\Classes}$ that assigns probability one to $\class$---and hence zero probability to $\prediction$.

	 Using a similar line of reasoning, we find that the maxima in the product are equal to $(1-\varepsilon)\proba(\feature{i}\vert\class)+\varepsilon$, whereas the minima are equal to $(1-\varepsilon)\proba(\feature{i}\vert\prediction)$.

So we see that Equation~\eqref{eq:intermediateinequality2} holds if and only if
\begin{equation}
\max_{\class \in \Classes \backslash \{\prediction\}} \frac{\Big((1-\robmetric)\proba(\class)+\robmetric \Big) \prod_{i=1}^{\numfeatures} \Big((1-\robmetric)\proba(\feature{i} \vert \class) + \robmetric \Big)}{(1-\robmetric)\proba(\prediction)  \prod_{i=1}^{\numfeatures} (1-\robmetric)\proba(\feature{i} \vert \class)} \geq 1,
\end{equation}
or equivalently, $\phi(\robmetric)\geq\proba(\prediction)\prod_{i=1}^N\proba(\feature{i}\vert\prediction)=\proba(\prediction,\features)$, using Equation~\eqref{eq:NB_joint} for the last equality.

So, we see that for $\robmetric\in[0,1)$, $\prediction$ is not robust w.r.t. $\locperta$ if and only if $\phi(\robmetric)\geq\proba(\prediction,\features)$.

On the other hand, we already know that $\phi(\robmetric)$ is strictly increasing in $\robmetric$ and, since $\nicefrac{\robmetric}{(1-\robmetric)}\geq1$ for $\robmetric\geq\nicefrac{1}{2}$, we also know that $\phi(\nicefrac{1}{2})\geq1\geq\proba(\prediction,\features)$ for $\robmetric\geq\nicefrac{1}{2}$.

It therefore follows that there is a smallest $\robmetric\in[0,1]$ for which $\prediction$ is not robust w.r.t. $\locperta$, and that this smallest value---which is by definition equal to $\loceps(\features)$---is indeed the unique value of $\robmetric$ for which $\phi(\robmetric)=\proba{(\prediction, \features)}$.
\end{proof}

Due to this result, $\loceps(\features)$ can easily be evaluated in practice: it suffices to find the unique value of $\varepsilon\in[0,1]$ for which the strictly increasing function $\phi$ is equal to $\proba{(\prediction, \features)}$, for example with a simple bisection method.

% Both robustness metrics are constructed by considering different neighborhoods of the joint probability mass function.
% The considered neighborhood for the global perturbation is a mixture of the original joint probability mass function with an arbitrary joint probability mass function.
% So, the possible mass functions in the neighborhood do not have any restrictions, whereas the mass functions in the neighborhood for the local perturbation are restricted by the same structural assumptions made in the naive Bayes model.
% Therefore, you could say that in a way the global perturbation is more general because it is not restricted by any assumptions.

\section{Experiments}
\label{sec:experiments}
To evaluate the performance of robustness quantification and compare it to uncertainty quantification, we use both methods to assess the reliability of the predictions issued by a simple Naive Bayes Classifier. The class has three possible values, and the four features can take 2, 3, 3 and 4 possible values, respectively.
To be able to study the effect of distribution shift and limited data, we generate our own data so we can control both aspects.

\subsection{The test set}
Our test distribution $\testprob$ is a mixture of two distributions: $\testprob = (1-\beta)\prob_{\mathrm{fix}} + \beta\prob_{\mathrm{rand}}$. In this mixture, $\prob_{\mathrm{fix}}$ is a fixed distribution that satisfies the Naive Bayes assumption (Equation~\eqref{eq:NB_joint}. The class probabilities are $0.4$, $0.35$, and $0.25$. For each class value $\class$ and feature $\Feature{i}$, we assign probability $0.85$ to one of the feature values and distribute the rest of the probability mass uniformly over the other feature values. This part of the mixture gives structure to $\testprob$, creating a correlation between the features and class that makes classification possible. $\prob_{\mathrm{rand}}$ is a randomly generated distribution, obtained by generating a random number for each $(\class,\features)$ and then normalizing so they sum to one. This part of the mixture makes sure that $\testprob$ does not satisfy the Naive Bayes assumption, and that the classification task is sufficiently difficult.
%The reason for mixing these distributions is to make the test distribution a bit less structured, otherwise the classifier would be able to learn the structure of the distribution too easily, which isn't realistic either.
We don't analyze the effect of \(\beta\) in our experiments; it has a fixed value \(\beta=0.3\) throughout. The same is true for $\prob_{\mathrm{rand}}$: while it is random, we keep it fixed throughout all experiments. This implies that $\testprob$ is fixed as well.
%The test distribution is kept fixed during all these experiments, so all experiments are conducted for one random \(\prob^*\) and \(\beta=0.3\).
The test set $\testset$ is a random sample of size \(N_{\mathrm{test}}=1000\) according to $\testprob$.

\subsection{The training sets}
Our training distributions are a mixture of $\testprob$ and a random distribution $\prob_{\mathrm{shift}}$, with \(\gamma \in [0,1]\) as mixture coefficient: $\trainprob = (1-\gamma)\testprob + \gamma\prob_{\mathrm{shift}}$. The maximum amount of distribution shift is determined by $\gamma$, but the actual amount of distribution shift also depends on the random distance between $\prob_{\mathrm{shift}}$ and $\testprob$. We use several values of $\gamma$ in our experiments to study the effect of distribution shift and consider multiple instantiations of $\prob_{\mathrm{shift}}$ to make sure that there is some actual distribution shift present.
For each $\trainprob$ obtained in this way, the corresponding training sets $\trainset$ are random samples of size \(N_{\mathrm{train}}\). To study the effect of limited data, our experiments consider several values of $N_{\mathrm{train}}$.

\subsection{A single experiment}\label{sec:singleexperiment}
For a single training set $\trainset$ of size $N_\mathrm{train}$, the experiment we conduct goes as follows. First, we use $\trainset$ to learn an NBC. Do to so, we begin by using 5-fold cross validation to optimize the smoothing parameter \(\alpha\) for maximal accuracy. Next, we use this optimal \(\alpha\) to learn a final NBC based on the whole training set. With the obtained NBC, for each of the 1000 instances in \(\testset\), we then make a prediction $\prediction$ based on the features $\features$, and compute the uncertainty metrics $\maxprob(\features)$, $\entropy(\features)$, $\aleatunc(\features)$, $\totunc(\features)$ and $\epistunc(\features)$ and robustness metrics $\globeps(\features)$ and $\loceps(\features)$ for that prediction.

\subsection{Accuracy-acceptance curves}\label{sec:curves}
To evaluate the performance of each of these metrics, for a single $\trainset$ and its corresponding NBC, we look at how good they are at assessing the reliability of the test instances.
To that end, we use each metric to order the 1000 instances in $\testset$.
For the uncertainty metrics, we do this in order of increasing uncertainty.
For the robustness metrics, we do this in order of decreasing robustness.
For each of these orderings, the hope is now of course that the first (least uncertain, most robust) instances are the most reliable ones, whereas the last (most uncertain, least robust) instances are the least reliable.
%Since an important use case of these metrics is to automate the decision-making of a model by allowing it to reject the instances for which the prediction is not reliable enough, we want to evaluate how well the metrics are able to identify the most reliable instances, and thus how well the model performs on the instances that would be accepted.
In order to evaluate this, we chose to use \emph{accuracy-acceptance curves}.\footnote{These are very similar to the so-called accuracy-rejection curves~\cite{nadeem2009accuracy} that are commonly used to compare classifiers with a reject option: the only difference is that we plot the accuracy as a function of the acceptance rate instead of the rejection rate. We prefer our version because it emphasizes that we are interested in the accuracy on the accepted (more reliable) instances.}
For each metric, such a curve plots the accuracy of the predictions of the NBC for the first $N$ instances in the ordering provided by the metric (so the accuracy for the $N$ most reliable instances, one would hope), as a function of the so-called acceptance rate $r=\nicefrac{N}{N_{\mathrm{test}}}=\nicefrac{N}{1000}$.
For $r=1$, this yields the accuracy on the entire test set, and hence the same result for all metrics.
For $r<1$, good metrics are able to increase this accuracy.
So better metrics yield higher accuracy-acceptance curves.

\subsection{Experimental setup}
We explore the influence of limited data and distribution shift by considering different sizes of training sets and different amounts of distribution shift.
The considered values for \(N_{\mathrm{train}}\) are \(\{25, 50, 100\}\) and the values for \(\gamma\) are \(\{0, 0.2, 0.4\}\).

For each combination of \(N_{\text{train}}\) and \(\gamma\) we generate 10 different training distributions \(\trainprob\) (with the same $\gamma$ and 10 different random $\prob_{\mathrm{shift}}$) and use each of these to sample 10 different training sets $\trainset$ of size $N_{\mathrm{train}}$.
For each of the resulting 100 training sets, we then run the experiment described in Section~\ref{sec:singleexperiment} and, for each of the considered metrics, construct an accuracy-acceptance curve as described in Section~\ref{sec:curves}.

To evaluate the overall performance of a metric, we consider the average of the 100 obtained accuracy-acceptance curves. To evaluate to variability of this performance (to which extent the 100 training sets yield different results), we consider the standard-deviation.
% The average of these curves, and
% For each \(N_{\text{tr}}\), we then combine the results of the 100 models by calculating their accuracy-rejection curves and then calculating the mean and variance over all these curves.
%The reason we choose to sample \(\trainset\) several times is to reduce the variability of the results due to sampling.
The results are depicted in Figure \ref{fig:combined_means} and \ref{fig:combined_stds}, respectively.

\begin{figure*}
	\centering
	\includegraphics[scale=1.25, trim=6.5cm 10.7cm 2.8cm 10cm]{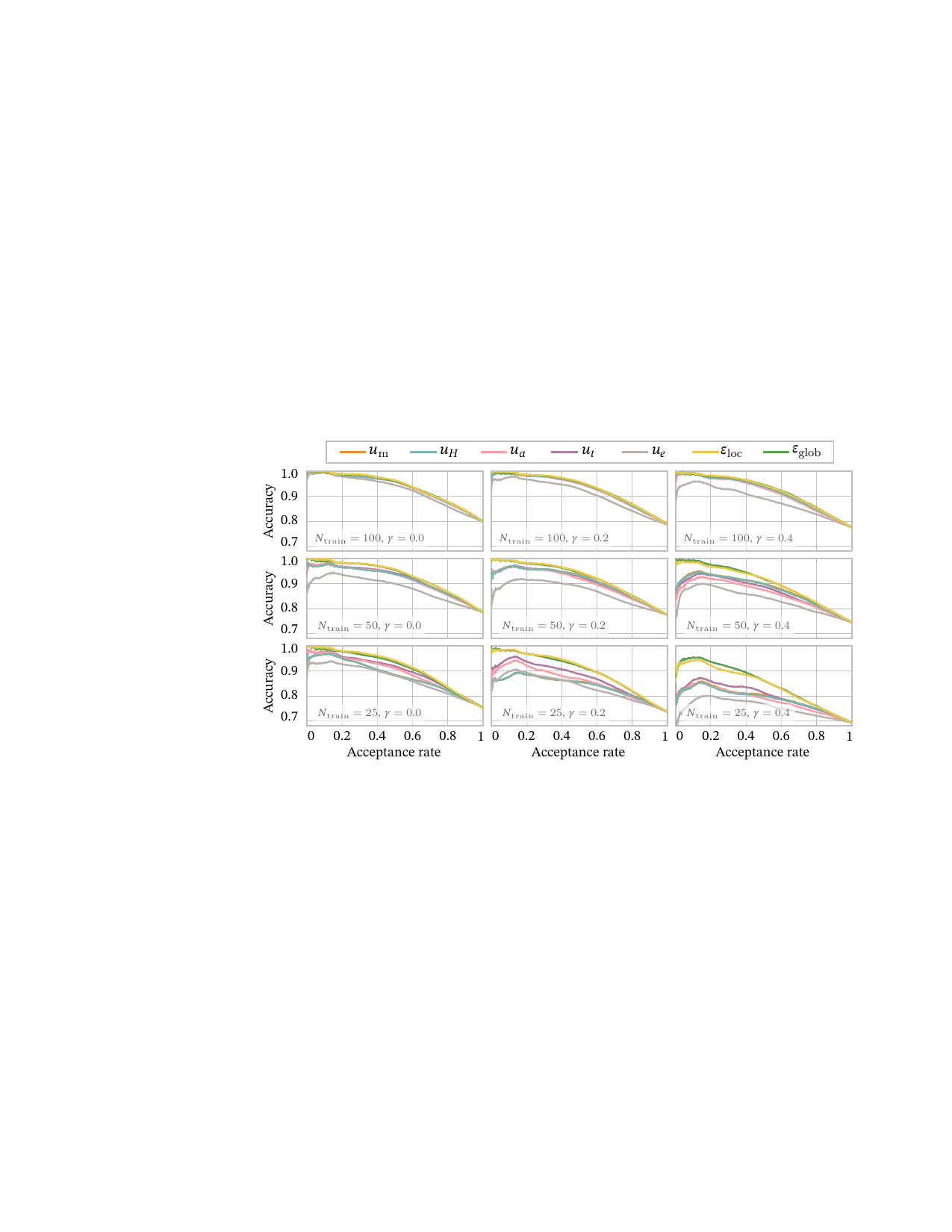}
	\caption{The means of the accuracy-acceptance curves for decreasing \(N_{\mathrm{train}}\) (100, 50, 35 from top to bottom) and increasing \(\gamma\) (0, 0.2, 0.4 left to right).}
	\label{fig:combined_means}
\end{figure*}

\begin{figure*}
	\centering
	\includegraphics[scale=1.25, trim=6.5cm 10.7cm 2.8cm 10cm]{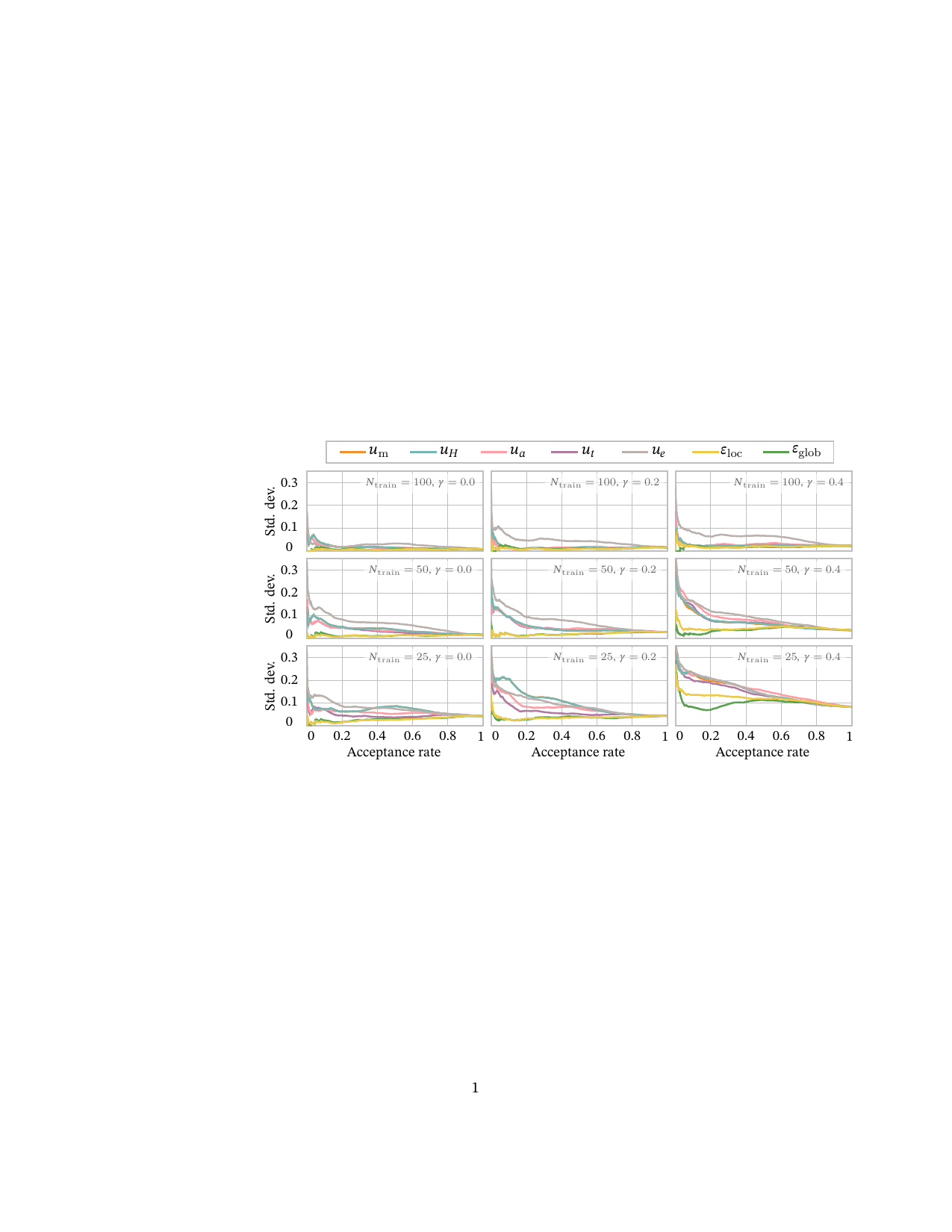}
	\caption{The standard deviations of the accuracy-acceptance curves for decreasing \(N_{\mathrm{train}}\) (100, 50, 25 from top to bottom) and increasing \(\gamma\) (0, 0.2, 0.4 left to right).}
	\label{fig:combined_stds}
\end{figure*}

\subsection{Results}
In Figure \ref{fig:combined_means}, we clearly see that the smaller the training set is, so comparing the plots from top to bottom, the worse the uncertainty metrics perform, since those curves get lower and lower, while the curves of the two robustness metrics stay relatively high.
A similar conclusion can be drawn for distribution shift. The more distribution shift there is, so if we move towards the right, the worse the performance of the uncertainty metrics is in comparison to the robustness measures, especially so for small training sets. The performance of the robustness metrics also seems to be the least affected by distribution shift or limited data, dropping considerably only for the most extreme case (on the bottom right). Meanwhile, even without distribution shift and for larger training sets, our robustness metrics are competitive with all the uncertainty metrics.
Comparing our two robustness metrics, we see that their performance is similar, with $\loceps$ performing slightly better for small data sets without distribution shift, and $\globeps$ performing better in the face of both challenges. %The worst performing metric seems to be the epistemic uncertainty $\epstunc$, which we suspect has to do with the

% Also, Note, that for the case where there is no distribution shift, so \(\gamma=0\), the performance (total accuracy) stays more or less the same.
% However, even though the model's performance stays the same, the performance of the uncertainty metrics decreases when there is less data available.

Looking at Figure \ref{fig:combined_stds}, which displays the standard deviations, it becomes clear that our robustness metrics not only perform better in terms of overall average performance, but that their performance is also more stable, in the sense that it varies considerably less across the different training sets. %So also in this sense, our robustness metrics seem to be able to quantify reliability.

%One might say that our metrics not only quantify robustness, but are robust as well.
%So, in this sense, you could say that there metrics are, regardless of the possibly poor word choice, more `robust'.
%Knowing that the reliability of such a metric does not vary much with the model, could be very beneficial in high-risk use cases.

\section{Conclusion/Discussion}
\label{sec:conclusion}

The main conclusion of our contribution is that the robustness metrics that are provided by robustness quantification do a good job at indicating to which extent a prediction of a generative classifier is reliable.
We furthermore see that robustness quantification is competitive with uncertainty quantification in scenarios with sufficiently large training sets and no distribution shift, and that it outperforms it in the face of both challenges.

So while robustness quantification ignores uncertainty, quantifying only how much uncertainty we could cope with if it were present, this seems to work better than quantifying epistemic uncertainty, or quantifying aleatoric uncertainty in the face of epistemic uncertainty.

Nevertheless, in our future work, since it seems to us that robustness and uncertainty metrics quantify entirely different aspects of reliability, we'd like to explore to which extent they can be combined to arrive at even better reliability metrics. To that end, we'd also first like to gain a better understanding about why, and in which contexts, robustness quantification works well.
%To that end, it would be interesting to see how the robustness and uncertainty metrics, and the different robustness metrics themselves, are related.

%For those models it can be interesting to take into account the relation between the amount of data with the complexity of the model.
%Also, in contrast to the NBC, more complex models could be well-calibrated, which could have an impact on how uncertainty and robustness quantification perform next to each other, since there might be a relation between the robustness of an instance and the difference of the predicted probability and the true probability.

% There are of course other robustness metrics possible based on the idea of neighborhoods, one of such could be to perturb each local mass function of the NBC with a different amount \(\varepsilon_i\).
% The robustness metric could then be the smallest sum of all these \(\varepsilon_i\).
% However, we chose to restrict ourselves, for now, to \(\loceps\) and \(\globeps\) for simplicity (since other metrics are quickly way more complex).

Finally, we would of course like to confirm our conclusions with more extensive experiments, for classifiers with more complex (deep) model architectures and continous features, and for real classification problems based on collected rather than generated data.

\additionalinfo% remove this command if you don't wan't to provide any of the information below

\begin{acknowledgements}
We would like to thank the anonymous reviewers for their time, thorough reading and, especially, for their constructive and helpful feedback.
The work of both authors was partially supported by Ghent University's Special Research Fund, through Jasper De Bock's starting grant number 01N04819.
\end{acknowledgements}

\begin{authorcontributions}
Both authors contributed to the conceptualization of the ideas in this paper. The first author ran the experiments and wrote a first version of the paper, which then underwent a series of revisions under the supervision of the second author.
\end{authorcontributions}

\printbibliography

@article{Friedman1997,
  title    = {On Bias, Variance, 0/1—Loss, and the Curse-of-Dimensionality},
  journal  = {Data Mining and Knowledge Discovery},
  volume   = {1},
  pages    = {55-77},
  year     = {1997},
  doi      = {10.1023/A:1009778005914},
  author   = {Friedman, Jerome H.}
}

@article{Domingos1997,
  title    = {On the Optimality of the Simple Bayesian Classifier under Zero-One Loss},
  journal  = {Machine Learning},
  volume   = {29},
  pages    = {103-130},
  year     = {1997},
  doi      = {10.1023/A:1007413511361},
  author   = {Domingos, Pedro and Pazzani, Michael}
}

@book{naiveBayes,
  title     = {Pattern Classification and Scene Analysis},
  author    = {Duda, Richard O. and Hart, Peter E.},
  year      = {1973},
  publisher = {Wiley, New York},
  doi = {10.2307/2344977},
}

@inbook{ITIPclassification,
author = {Corani, Giorgio and Abellán, Joaquín and Masegosa, Andrés and Moral, Serafin and Zaffalon, Marco},
publisher = {John Wiley \& Sons, Ltd},
isbn = {9781118763117},
title = {Classification},
booktitle = {Introduction to Imprecise Probabilities},
chapter = {10},
pages = {230-257},
doi = {10.1002/9781118763117.ch10},
year = {2014}
}

@inproceedings{NIPS2014_09662890,
  author    = {De Bock, Jasper and de Campos, Cassio P. and Antonucci, Alessandro},
  booktitle = {Advances in Neural Information Processing Systems},
  pages     = {},
  publisher = {Curran Associates, Inc.},
  title     = {Global Sensitivity Analysis for MAP Inference in Graphical Models},
  url       = {https://proceedings.neurips.cc/paper_files/paper/2014/file/0966289037ad9846c5e994be2a91bafa-Paper.pdf},
  volume    = {27},
  year      = {2014},
}

@book{quinonero2022dataset,
  title     = {Dataset shift in machine learning},
  author    = {Qui{\~n}onero-Candela, Joaquin and Sugiyama, Masashi and Schwaighofer, Anton and Lawrence, Neil D.},
  year      = {2009},
  publisher = {The MIT Press},
  doi = {10.5555/1462129},
}

@article{ZAFFALON20025,
author = {Marco Zaffalon},
  title    = {The naive credal classifier},
  journal  = {Journal of Statistical Planning and Inference},
  volume   = {105},
  number   = {1},
  pages    = {5-21},
  year     = {2002},
  doi      = {10.1016/S0378-3758(01)00201-4}
}

@article{hullermeier2021aleatoric,
  title     = {Aleatoric and epistemic uncertainty in machine learning: An introduction to concepts and methods},
  author    = {H{\"u}llermeier, Eyke and Waegeman, Willem},
  journal   = {Machine learning},
  volume    = {110},
  number    = {3},
  pages     = {457--506},
  year      = {2021},
  publisher = {Springer},
  doi = {10.1007/s10994-021-05946-3},
}

@inproceedings{pmlr-v180-hullermeier22a,
  title     = {Quantification of Credal Uncertainty in Machine Learning: A Critical Analysis and Empirical Comparison},
  author    = {H\"ullermeier, Eyke and Destercke, S\'ebastien and Shaker, Mohammad Hossein},
  pages     = {548--557},
  year      = {2022},
  volume    = {180},
  series    = {Proceedings of Machine Learning Research},
  publisher = {PMLR},
  pdf       = {https://proceedings.mlr.press/v180/hullermeier22a/hullermeier22a.pdf},
  url       = {https://proceedings.mlr.press/v180/hullermeier22a.html},
}

@misc{correia2020robustclassificationdeepgenerative,
  title         = {Towards Robust Classification with Deep Generative Forests},
  author        = {Alvaro H. C. Correia and Robert Peharz and Cassio P. de Campos},
  year          = {2020},
  eprint        = {2007.05721},
  archiveprefix = {arXiv},
  primaryclass  = {stat.ML}
}

@inproceedings{pmlr-v62-mauá17a,
  title     = {Credal Sum-Product Networks},
  author    = {Mauá, Denis D. and Cozman, Fabio G. and Conaty, Diarmaid and Campos, Cassio P.},
  pages     = {205--216},
  year      = {2017},
  volume    = {62},
  series    = {Proceedings of Machine Learning Research},
  publisher = {PMLR},
  pdf       = {http://proceedings.mlr.press/v62/mauá17a/mauá17a.pdf},
  url       = {https://proceedings.mlr.press/v62/mau%C3%A117a.html},
}

@book{augustin2014introduction,
  title     = {Introduction to imprecise probabilities},
  author    = {Augustin, Thomas and Coolen, Frank P.A. and De Cooman, Gert and Troffaes, Matthias C.M.},
  year      = {2014},
  publisher = {John Wiley \& Sons},
  doi       = {10.1002/9781118763117}
}

@inproceedings{pmlr-v139-koh21a,
  title     = {WILDS: A Benchmark of in-the-Wild Distribution Shifts},
  author    = {Koh, Pang Wei and Sagawa, Shiori and Marklund, Henrik and Xie, Sang Michael and Zhang, Marvin and Balsubramani, Akshay and Hu, Weihua and Yasunaga, Michihiro and Phillips, Richard Lanas and Gao, Irena and Lee, Tony and David, Etienne and Stavness, Ian and Guo, Wei and Earnshaw, Berton and Haque, Imran and Beery, Sara M and Leskovec, Jure and Kundaje, Anshul and Pierson, Emma and Levine, Sergey and Finn, Chelsea and Liang, Percy},
  booktitle = {Proceedings of the 38th International Conference on Machine Learning},
  pages     = {5637--5664},
  year      = {2021},
  volume    = {139},
  series    = {Proceedings of Machine Learning Research},
  publisher = {PMLR},
  pdf       = {http://proceedings.mlr.press/v139/koh21a/koh21a.pdf},
  url       = {https://proceedings.mlr.press/v139/koh21a.html}
}

@article{zech2018variable,
  title     = {Variable generalization performance of a deep learning model to detect pneumonia in chest radiographs: a cross-sectional study},
  author    = {Zech, John R. and Badgeley, Marcus A and Liu, Manway and Costa, Anthony B and Titano, Joseph J and Oermann, Eric Karl},
  journal   = {PLoS medicine},
  volume    = {15},
  number    = {11},
  pages     = {1-17},
  year      = {2018},
  doi = {10.1371/journal.pmed.1002683},
}

@article{raudys1991small,
  title   = {Small sample size effects in statistical pattern recognition: Recommendations for practitioners},
  author  = {Raudys, Sarunas J. and Jain, Anil K. and others},
  journal={IEEE Transactions on Pattern Analysis and Machine Intelligence},
  volume  = {13},
  number  = {3},
  pages   = {252--264},
  year    = {1991},
  doi={10.1109/34.75512},
}

@article{vabalas2019machine,
  title     = {Machine learning algorithm validation with a limited sample size},
  author    = {Vabalas, Andrius and Gowen, Emma and Poliakoff, Ellen and Casson, Alexander J.},
  journal   = {PloS one},
  volume    = {14},
  number    = {11},
  pages     = {1-20},
  year      = {2019},
  doi = {10.1371/journal.pone.0224365},
}

@inproceedings{shaker2020aleatoric,
  title        = {Aleatoric and epistemic uncertainty with random forests},
  author       = {Shaker, Mohammad Hossein and H{\"u}llermeier, Eyke},
  booktitle    = {Advances in Intelligent Data Analysis XVIII},
  pages        = {444--456},
  year         = {2020},
  organization = {Springer},
  doi = {10.1007/978-3-030-44584-3_35},
}

@misc{sale2024label,
  title   = {Label-wise Aleatoric and Epistemic Uncertainty Quantification},
  author  = {Sale, Yusuf and Hofman, Paul and L{\"o}hr, Timo and Wimmer, Lisa and Nagler, Thomas and H{\"u}llermeier, Eyke},
  year    = {2024},
  eprint        = {2406.02354},
  archiveprefix = {arXiv}
}

@book{berger2013statistical,
  title     = {Statistical decision theory and Bayesian analysis},
  author    = {Berger, James O},
  year      = {2013},
  publisher = {Springer Science \& Business Media},
  doi       = {10.1007/978-1-4757-4286-2},
}

@book{edition2013bayesian,
  title     = {Bayesian Data Analysis},
  author    = {Gelman, A and Carlin, J.B. and Stern, H.S. and Dunson, D.B. and Vehtari, A. and Rubin, D.B.},
  year      = {2013},
  publisher = {CRC Press},
  doi       = {10.1201/b16018}
}

@inproceedings{nadeem2009accuracy,
  title        = {Accuracy-rejection curves (ARCs) for comparing classification methods with a reject option},
  author       = {Nadeem, Malik Sajjad Ahmed and Zucker, Jean-Daniel and Hanczar, Blaise},
  booktitle    = {Machine Learning in Systems Biology},
  pages        = {65--81},
  year         = {2009},
  organization = {PMLR},
}

@article{bai2021recent,
  title   = {Recent advances in adversarial training for adversarial robustness},
  author  = {Bai, Tao and Luo, Jinqi and Zhao, Jun and Wen, Bihan and Wang, Qian},
  journal = {arXiv preprint arXiv:2102.01356},
  year    = {2021},
}

@article{carlini2019evaluating,
  title   = {On evaluating adversarial robustness},
  author  = {Carlini, Nicholas and Athalye, Anish and Papernot, Nicolas and Brendel, Wieland and Rauber, Jonas and Tsipras, Dimitris and Goodfellow, Ian and Madry, Aleksander and Kurakin, Alexey},
  journal = {arXiv preprint arXiv:1902.06705},
  year    = {2019},
}

@inproceedings{NEURIPS2020_d8330f85,
  author    = {Taori, Rohan and Dave, Achal and Shankar, Vaishaal and Carlini, Nicholas and Recht, Benjamin and Schmidt, Ludwig},
  booktitle = {Advances in Neural Information Processing Systems},
  editor    = {H. Larochelle and M. Ranzato and R. Hadsell and M.F. Balcan and H. Lin},
  pages     = {18583--18599},
  publisher = {Curran Associates, Inc.},
  title     = {Measuring Robustness to Natural Distribution Shifts in Image Classification},
  url       = {https://proceedings.neurips.cc/paper_files/paper/2020/file/d8330f857a17c53d217014ee776bfd50-Paper.pdf},
  volume    = {33},
  year      = {2020},
}

\end{document}